\DeclareMathOperator{\softmax}{{softmax}}
\DeclareMathOperator*{\argmax}{{arg\,max}}
\newcommand{\R}{\mathbb{R}}
\newcommand{\Rn}{\R^n}
\newcommand{\loss}{\mathcal{L}}
\newcommand{\LD}{L_{\mathcal D}}
\newcommand{\fM}{f}
\newcommand{\PP}{\rho} 
\newcommand{\DataSupp}{\mathcal{X}}
\newcommand{\DS}{S}
\newcommand{\YY}{\mathcal{Y}}
\newcommand{\EE}{\mathbb{E}}
\newtheorem{theorem}{Theorem}[section]
\newtheorem{lemma}[theorem]{Lemma}
\theoremstyle{definition}
\newtheorem{example}[theorem]{Example}
\theoremstyle{remark}
\newtheorem{remark}[theorem]{Remark}
\numberwithin{equation}{section}
\begin{document}

\title[Regularization for machine learning]{Partial differential equation regularization  for supervised machine learning}


\author{Adam M. Oberman}
\email{adam.oberman@mcgill.ca}
\thanks{This material is based on work supported by the Air Force Office of Scientific Research under award number FA9550-18-1-0167}
%

\subjclass[2010]{Primary 65N99, Secondary 35A15, 49M99, 65C50}

\date{\today}


\begin{abstract}
	This article is an overview of supervised machine learning problems for regression and classification.  Topics include: kernel methods, training by stochastic gradient descent, deep learning architecture, losses for classification, statistical learning theory, and dimension independent generalization bounds. Implicit regularization in deep learning examples are presented, including data augmentation, adversarial training, and additive noise.  These methods are reframed as explicit gradient regularization.   
\end{abstract}

\maketitle


\section{Introduction}
In this work we present a mathematically oriented introduction to the gradient regularization approach to deep learning, with a focus on convolutional neural networks (CNNs) for image classification.   Image classification by deep neural networks is now well established, as exemplified by the impressive performance of models on  data sets such as those presented in Example~\ref{ex:dataset}.  The goal is to build a classification map (model) from images to labels, as illustrated in Figure~\ref{fig:ImageNetMap}. 

\begin{figure}
  \includegraphics[height = 6cm]{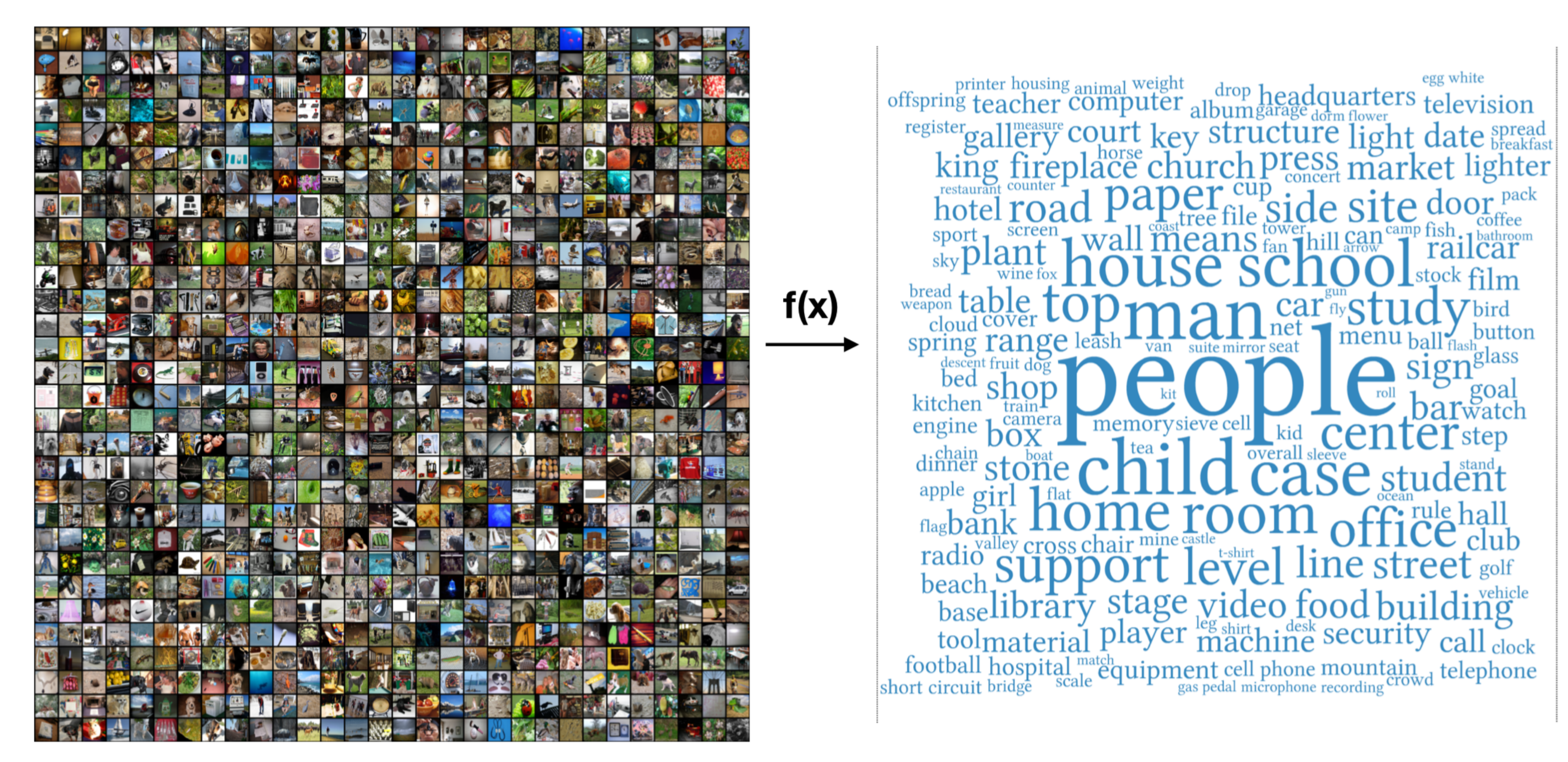}
  \caption{Illustration of the classification map $f(x):\DataSupp\to \YY$ on the ImageNet dataset.}
  \label{fig:ImageNetMap}
\end{figure}

\begin{example}\label{ex:dataset}
The MNIST dataset consists of $m = 70,000$,  $d = 28\times 28$ greyscale images of handwritten digits.  CNNs achieve an error of less than 1\% on MNIST.  On this simple data set, support vector machines (SVM) achieve accuracy almost as high.
The CIFAR-10 dataset consists of $m = 60,000$ $d =32\times 32\times 3 = 3072$ colour images in $K=10$ classes, with 6000 images per class.  CNNs can achieve accuracy better that 96\% on this dataset. 
	ImageNet has $K=21,841$ class labels with $m=14,197,122$ total images, at a resolution of  $256\times 256$ pixels with 3 color channels, so $d = 256\times 256\times 3 = 196608$.   See Figure~\ref{fig:ImageNetMap} for an illustration of the classification map on ImageNet.  Current models achieve accuracy greater than 84\% and Top 5 accuracy (meaning the correct label is in the five highest ranked predictions) better than~97\%. 

\end{example}

 Machine learning can be divided into supervised, unsupervised, and reinforcement learning.  Supervised learning can be further divided into regression, where the target function value is a number, and classification, where the target is labels.  Deep learning, which refers to machine learning using the deep neural network hypothesis class, is having an impact in many areas, far too many to cover effectively here.  By focussing on CNNs for classification and by writing for a mathematical audience, we hope to make a contribution to this literature.   

 Another topic which may be of interest to readers  is generative models using convolutional neural networks~\cite{goodfellow2014generative}. These models sample a distribution with the goal of generating new images from the distribution.  Recent work has exploited connections with Optimal Transportation \cite{1701.07875}.

\section{Machine Learning problem definition}\label{sec:ML}
Let $x \in \DataSupp$ be an input, where $\DataSupp \subset [0,1]^d$ and the dimension is large, $d \gg 1$.  We consider a target set $\YY$, which will be either  $\YY =  \R$, for regression problems, or   $\YY = \{1, \dots, K\}$, for (mono-label) classification problems.  The dataset
\begin{equation}\label{dataset}
	\DS_m = \{(x_1, y_1), \dots , (x_m, y_m) \}
\end{equation}
consists of $m$ samples, $x_i$, drawn i.i.d.\ from a data distribution, $\rho(x)$, with support $\DataSupp$.  The labels $y_i \in \YY$ are also given.  

The non-negative function $\loss:\YY \times \YY \to \R$ is a loss function if it is zero only when $y_1 = y_2$.  
We say the loss is convex if it is convex as a function of $y_1$ for every fixed $y_2$. 
For regression problems, the quadratic loss $\loss(y_1,y_2) = \|y_1 - y_2\|^2$ is often used.  

Our objective is to find a function $f: \DataSupp \to  \YY$ which minimizes the expected loss 
\begin{equation}\label{Loss_Gen}
\LD(f)= \EE_{x\sim \PP }[{\loss }(f(x),y(x))] = \int_{\DataSupp} {\loss }(f(x),y(x)) d\rho(x)
\end{equation}
The expected loss depends on unavailable data and labels, so it needs to be approximated.  
One common practice is to  divide the data set into a training set and a holdout test set  which is not used for training.  Then the expected loss is estimated on the test set.
This procedure allows  for training by minimizing the empirical loss
\begin{equation}
	\tag{EL}\label{ELH}
  L_{\DS_m}[f]= 
	\frac{1}{m} \sum_{ i = 1}^m {\mathcal L}(f(x_i),y_i).
\end{equation}
Typically, the functions considered are restricted to a parametric class
\begin{equation}\label{Hdefn}
	\mathcal H = \{ f(x,w) \mid w \in \R^D \}
\end{equation}
so that minimization of \eqref{ELH} can be rewritten as  the finite dimensional optimization problem
\begin{equation}
	\tag{EL-W} \label{ELW}
	\min_{w}  \frac{1}{m} \sum_{ i = 1}^m {\mathcal L}(f(x_i,w),y_i).
\end{equation}
The problem with using~\eqref{ELW} as a surrogate for \eqref{Loss_Gen}, is that a minimizer of \eqref{ELW} could \emph{overfit}, meaning that the expected loss is much larger than the empirical loss.  Classical machine learning methods avoid overfitting by restricting $\mathcal H$ to be a class of simple (e.g. linear) functions.

\begin{remark}
Some machine learning textbooks focus on a statistical point of view, which is important for problems where is measurement or label noise, or when there is prior statistical information. For example in linear regression, we assume a parametric form for $f(x)$, and the goal is to overcome the noise in the data.  
	In contrast, for image classification on benchmark data sets, such as ImageNet, the images are very clear, and the number of incorrect or ambiguous labels is less than a fraction of a percent, which is still small compared to the 4\% error.  Thus to first approximation, we can assume that the images are free of noise, that all labels are correct, and that there are no ambiguous images. In other words, $y_i = y(x_i)$ for a label function $y(x)$.   It is our opinion that the challenge in learning $y(x)$ comes not from uncertainty, noise, or ambiguity, but rather from the complexity of the functions involved.
\end{remark}

The point of view of this work, expanded upon in \cite{FinlayCalder}, is that while  deep learning models are parametric, the high degree of expressibility of deep neural networks  renders the models  effectively nonparametric.  For such problems, regularization of the loss is necessary for generalization. Regularization is a topic in machine learning as well, but it is interpreted in a wider sense.  We will show that some implicit regularization methods in learning can be reinterpreted as explicit  regularized models.


\subsection{Classification losses}
A method for classification with $K$ classes, $\YY = \{1, \dots, K\}$, is to output a a scoring function for each class, $f = (f_1, \dots, f_K)$ and choose the index of the maximum component as the classification
\begin{equation}
	\label{class}
	C(f(x)) = \argmax_j f_j(x)
\end{equation}
The relevant loss is the 0-1 classification loss,
 \begin{equation}
\loss_{0,1}(f,k) = \begin{cases}
	0 & \text{ if } C(f) = k\\
	1 & \text{ otherwise}
\end{cases}
 \end{equation}
However, this loss is both discontinuous and nonconvex, so a surrogate convex loss is used in practice.   The margin of the function $f(x)$ is given by
\begin{equation}
	\label{max_loss}
\loss_{\max} (f,k) = \max_i f_i - f_k
\end{equation}
which is convex upper bound to the classification loss, making it a convex surrogate loss \cite[Ch 4.7]{mohri2018foundations}.  This loss was proposed in~\cite{crammer2001algorithmic} and studied in~\cite{zhang2004statistical}.  
The non-convex margin loss \cite[Ch 9]{mohri2018foundations} is used to obtain margin bounds for multi-class classification.

\section{Function approximation and regularization}\label{sec:ApproxTheory}

\subsection{Regularization}
The problem of fitting a function can be cast as a multi objective problem: (I) fit the given data points, (II) reduce overfitting.  Minimizing the empirical loss corresponds to a relaxation of (I).  Choosing a a class of functions (hypothesis space) $\mathcal H$  which does not overfit corresponds to a hard constraint on (II) and leads to \eqref{ELH}.  Defining a regularization functional $\mathcal R(f)$ which is some measure of overfit, leads to a soft constraint on (II).  Hypothesis classes are often parametric, but they can also be defined using the regularization functional, as $\mathcal H = \{ f \mid \mathcal R(f) \leq C \}$.   See Table~\ref{table:1}.

The regularized empirical loss minimization problem takes the form
\begin{equation}
	\tag{EL-R} \label{ELR}
	\min_f \frac{1}{m} \sum_{ i = 1}^m {\mathcal L}(\fM(x_i),y_i) + \lambda \mathcal R(f), 
\end{equation}
where $\lambda$ is a parameter which measures the strength of the regularization term.  
\begin{table}[]
\begin{tabular}{|l|l|l|}
\hline
Constraint      & \multicolumn{1}{c|}{Fit data}                                 & \multicolumn{1}{c|}{No overfit} \\ \hline
Hard  & $f(x_i) = y_i$ for all $i$                                      & $f\in \mathcal H$               \\ \hline
Soft   & $\min_f \tfrac{1}{m} \sum_{ i = 1}^m {\mathcal L}(f(x_i),y_i)$ & $\lambda \mathcal R(f)$                  
\vspace{.03cm}
\\ \hline
\end{tabular}
\vspace{1em} 
\caption{Balancing the objectives of fitting the data and not overfitting on unseen data, using hard or soft constraints.}
\label{table:1}
\end{table}

\begin{example}[Cubic splines] 
One dimensional cubic splines are piecewise cubic, twice-differentiable interpolating functions  
\cite{wahba1990spline}.  We can write cubic spline interpolation in the form
\[
	\min_{f \in \mathcal H} \frac{1}{m} \sum_{ i = 1}^m {\mathcal L}(f(x_i),y_i), \qquad \mathcal H = \{ f(x) \text{ and } f'(x) \text{ continuous }\}.
\] 
Equivalently, the solution is  characterized by 
	\[
	\min_f \mathcal R^{curv}(f) = \int (f''(x))^2 dx, \quad \text{ subject to }  \{f(x_i) = y_i, i = 1,\dots m \}
	\]
\end{example} 

Regularized loss functionals such as \eqref{ELR} arise in mathematical approaches to image processing~\cite{aubert2006mathematical, sapiro2006geometric} as well as inverse problems in general.  The approach has a mature mathematical theory which includes stability, error analysis, numerical convergence, etc.  Mathematical Image processing tools have also been used for other important deep learning tasks, such as Image Segmentation.   In signal processing, the loss and the regularizer are designed to adapt to the signal and noise model.  So, for example, quadratic losses arise from Gaussian noise models.  Classical Tychonov regularization \cite{tikhonov2009solutions},   $\mathcal R^{Tych}(f) =\int_D  |\nabla f(x)|^2$ is compatible with smooth signals.  Total Variation regularization~\cite{rudin1992nonlinear}, $\mathcal R^{TV}(f) =\int_D  |\nabla f(x)|$  is compatible with images. 

\subsection{Curse of dimensionality}
Mathematical  approximation theory \cite{cheney1966introduction} allows us to prove  convergence of approximations $f^m \to f$ with rates which depend on the error of approximation  and on the typical distance between a sampled point and a given data point,  $h$.   For uniform sampling of the box $[0,1]^d$ with $m$ points, we have $h = m^{1/d}$.  When the convergence rate is a power of $h$,  this is an example of the \emph{curse of dimensionality}: the number of points required to achieve a given error grows exponentially in the dimension.   Since the dimension is large, and the number of points is in the millions, the bounds obtained are vacuous.   

There are situations in function approximation where convergence is exponentially fast.  For example, Fourier approximation methods can converge exponentially fast in $h$ when the function are smooth enough.   Next  we will see the how kernel methods can overcome the curse of dimensionality.  Later we will present a connection between kernel methods and Fourier regularization.

\section{Kernel methods}\label{sec:kernel}

The state of the art methods in machine learning until the mid 2010s were kernel methods \cite[Ch 6]{mohri2018foundations}, 
which are based on mapping the data  $x \in \DataSupp $ into a high dimensional feature space,  $\Phi : \DataSupp \to H$,  where $\Phi(x) = (\phi_1(x), \phi_2(x), \dots )$.  The hypothesis space consists of linear combinations of feature vectors,
\begin{equation}\label{kernel_linear}
 \mathcal H^{ker} = \left \{ f(x,w) \mid  f(x,w) = \sum_i w_i \phi_i(x) \right \} 
\end{equation}
The feature space is a reproducing kernel Hilbert space, $H$, which inherits an inner product from the mapping $\Phi$.   This allows costly inner products in $H$ to be replaced with a function evaluation
\begin{equation}\label{kxy}
	K(x,y) =  \Phi(x) \cdot \Phi(y) = \sum_i \phi_i(x).
	\cdot \phi_i(y)
\end{equation}
The regularized empirical loss functional is given by 
\begin{equation}\label{standard kernel opt}\tag{EL-K}
	\min_{f \in \mathcal H^{ker} } \frac{1}{m} \sum_{ i = 1}^m {\mathcal L}(f(x_i,w),y_i) + \frac \lambda 2 \|w\|^2_H
\end{equation}
For convex losses, \eqref{standard kernel opt} is a convex optimization in $w$.  For classification, the margin loss is used, and the optimization problem corresponds to quadratic programming.  In the case of quadratic losses, the optimization problem is quadratic, and the minimizer of \eqref{standard kernel opt} has the explicit form 
\begin{equation}
	f(x) = \sum_{i=1}^m w_i K(x,x_i), \qquad 	(M+\lambda I ) w  = y
\end{equation}
where the coefficients $c$ are given by the solution of the system of linear equations with $M_{ij} = K(x_i,x_j)$, and $I$ is the identity matrix.  Note that the regularization term has a stabilizing effect: the condition number of the system with $\lambda I$ improves with $\lambda > 0$.  Better conditioning of the linear system means that the optimal weights are less sensitive to changes in the data
 \[
 w^* = 	(M+\lambda I )^{-1} y
 \]
Algorithm to minimize \eqref{standard kernel opt} are designed to be written entirely in terms of inner products, allowing for high dimensional feature spaces. 

\section{Training and SGD}\label{sec:SGD}

\subsection{Optimization and variational problems}
Once we consider a fixed hypothesis class, \eqref{ELR} becomes \eqref{ELW}, which is a finite dimensional optimization problem for the parameters $w$.  Optimization problems are easier to study and faster to solve when they are convex~\cite{boyd2004convex}.   Support vector machines are affine functions of $w$ and $x$.  Kernel methods are affine functions of $w$. This makes the optimization for kernels methods a convex problem. 

We consider problems where the number of data points, $m$, the dimension of the data, $n$, and the number of parameters, $D$, are all large.  In this case the majority of optimization algorithms developed for smaller scale problems are impractical,  for the simple reason that it may not efficient to work with $m\times m$ matrices or take gradients of losses involving $m$ copies of $n$ dimensional data.  Stochastic gradient descent (SGD) has emerged as the most effective algorithm \cite{bottou2016optimization}: where previous algorithms tried to overcome large data by visiting each date point once, SGD uses many  more iterations, visiting data multiple times, making incremental progress towards the optimum at each iteration.   For this reason, the number of iterations of SGD is measured in \emph{epochs}, which corresponds to a unit of $m$ evaluations of $\nabla \loss(f(x_i),y_i)$.  

\subsection{Stochastic gradient descent}
 Evaluating the loss \eqref{ELH} on all $m$ data points can be costly. Define random minibatch $I \subset \{1,\dots, m\}$, and define the corresponding minibatch loss by
\[
L_I(w)=  \frac{1}{|I|} \sum_{i \in I} {\mathcal L}(f(x_i,w),y_i)
\]
Stochastic gradient descent corresponds to 
\begin{equation}
	\label{SGD_w}
	w^{k+1} = w^k - h_k \nabla_w L_{I_k}(w^k), \quad {I_k} \text{ random, $h_k$ learning rate} 
\end{equation}

\begin{example}[simple SDG example]
Let $x_i$ be i.i.d.\ samples from the uniform probability $\rho_1(x)$ for $x \in [0,1]^2$, the two dimensional unit square.  Consider estimating the mean using the following quadratic loss
\begin{equation}\label{ELM mean}\tag{EL-Q}
	\min_{f \in \mathcal H}   \frac 1 m \sum_{i=1}^m (x_i - f(x_i,w))^2
\end{equation}
where $\mathcal H = \{ f(x,w) = w \mid w \in \R^2\}$ is simply the set of two dimensional vectors representing the mean.   Empirical loss minimization of \eqref{ELM mean} corresponds to simply computing the sample mean, $w^* = w^*(S_m) = \sum x_i / m$.   The full dataset and a random minibatch are illustrated in Figure~\ref{fig:minibatch}.   As the value $w_k$ gets closer to the minimum, the error in the gradient coming from the minibatch increases, as illustrated in the Figure.  As a result a decreasing time step (learning rate) $h_k$ is used.  The schedule for $h_k$ is of order $1/k$, and the convergence rate for SGD, even in the strongly convex case, is also of order $1/k$.  
\end{example}

 \subsection{The convergence rate of SGD}
See \cite{bottou2016optimization} for a survey on results on SGD.  The following simple result was proved in \cite{MarianaSGD}.
	Suppose $f$ is $\mu$-strongly convex and $L$-smooth, with minimum at $w^*$. 	Let $q(w) = \|w-w^*\|^2$.
Write 
\[
\nabla_{mb} f(w ) = \nabla f(w)+ e,
\]
with $e$ is a mean zero random error term, with variance $\sigma^2$.
Consider the stochastic gradient descent iteration
\begin{align}\label{ASGD}\tag{SGD}
	w_{k+1} 
	= w_k - h_k \nabla_{mb} f(w_k),
\end{align}
with learning rate
\begin{equation}
	\tag{SLR}\label{LR}
	h_k = \frac{1}{\mu(k+q_0^{-1} \alpha_S^{-1}) },  
\end{equation}
\begin{theorem}\label{sgd_thm}
	Let $w_k,\ h_k$ be the sequence given by~\eqref{ASGD} \eqref{LR}.
	Then,
	\begin{equation}
	\mathbb{E}  \left [ {q_k \mid w_{k-1}}  \right ] \leq \frac{1}{\alpha_S k +q_0^{-1}}, \mbox{ for all $k\geq 0$.}
	\end{equation}
\end{theorem}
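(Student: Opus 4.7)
The plan is a standard one-step Lyapunov recursion for $q_k$, followed by an induction that shows the envelope $1/(\alpha_S k + q_0^{-1})$ is preserved under the prescribed learning rate. I would first expand
\begin{equation*}
q_{k+1} = q_k - 2 h_k (w_k - w^*)\cdot \nabla_{mb} f(w_k) + h_k^2\,\|\nabla_{mb} f(w_k)\|^2,
\end{equation*}
then take the conditional expectation in the minibatch noise $e_k$. Using $\mathbb{E}[e_k]=0$ and $\mathbb{E}\|e_k\|^2 = \sigma^2$, the cross term collapses and the gradient magnitude splits into a deterministic part plus $\sigma^2$:
\begin{equation*}
\mathbb{E}[q_{k+1}\mid w_k] = q_k - 2 h_k (w_k - w^*)\cdot \nabla f(w_k) + h_k^2\|\nabla f(w_k)\|^2 + h_k^2 \sigma^2.
\end{equation*}

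Next I would apply the two structural assumptions. $\mu$-strong convexity gives $(w_k-w^*)\cdot\nabla f(w_k) \geq \mu q_k$, and $L$-smoothness together with $\nabla f(w^*)=0$ gives $\|\nabla f(w_k)\|^2 \leq L^2 q_k$. Substituting, I obtain the master recursion
\begin{equation*}
\mathbb{E}[q_{k+1}\mid w_k] \;\leq\; \bigl(1 - 2h_k\mu + h_k^2 L^2\bigr)\, q_k + h_k^2 \sigma^2.
\end{equation*}
With the schedule \eqref{LR} I have $h_k \mu = 1/(k+c_0)$ where $c_0 := q_0^{-1}\alpha_S^{-1}$, so after taking total expectation this reads $\mathbb{E}[q_{k+1}] \leq (1 - 2/(k+c_0))\,\mathbb{E}[q_k] + O\bigl(1/(k+c_0)^2\bigr)$.

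The final step is induction on the envelope $\mathbb{E}[q_k] \leq 1/(\alpha_S(k+c_0))$. The base case $k=0$ is $q_0 \leq 1/(\alpha_S c_0) = q_0$, which holds with equality by the definition of $c_0$. For the inductive step, substituting the envelope at step $k$ into the master recursion and clearing the common factor $\alpha_S (k+c_0)^2$ reduces the target bound at step $k+1$ to the scalar inequality
\begin{equation*}
\frac{L^2}{\mu^2(k+c_0)} + \frac{\alpha_S \sigma^2}{\mu^2} \;\leq\; 1 + \frac{1}{k+c_0+1},
\end{equation*}
where I use the algebraic identity $(k+c_0)^2/(k+c_0+1) = (k+c_0-1) + 1/(k+c_0+1)$. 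This inequality holds uniformly in $k\geq 0$ provided $\alpha_S$ is chosen small enough in terms of $\mu$, $L$, $\sigma$, and $q_0$.

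The main obstacle is not the qualitative $O(1/k)$ decay, which is a routine consequence of the recursion, but rather closing the induction exactly at the stated envelope $1/(\alpha_S k + q_0^{-1})$ rather than up to a lower-order slack. The burn-in offset $q_0^{-1}$ is doing real work here: it makes the base case tight, it keeps the smoothness overshoot $h_k^2 L^2 q_k$ from destabilizing the early iterations when $h_k$ is largest, and it ties together the smoothness and noise contributions through the single parameter $\alpha_S$. All remaining work is scalar algebra once $\alpha_S$ is selected so that the displayed inequality above is satisfied.
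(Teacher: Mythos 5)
Your argument is correct and is essentially the proof the paper is relying on: the paper does not prove Theorem~\ref{sgd_thm} itself but defers to \cite{MarianaSGD}, where the same Lyapunov function $q(w)=\|w-w^*\|^2$ is used with exactly this one-step expansion, the strong-convexity/smoothness bounds $(w_k-w^*)\cdot\nabla f(w_k)\ge\mu q_k$ and $\|\nabla f(w_k)\|^2\le L^2 q_k$, and an induction closing the envelope $1/(\alpha_S k+q_0^{-1})$. The only point worth making explicit is the one you already flag: since the paper never defines $\alpha_S$, the theorem is only true once $\alpha_S$ is fixed so that your displayed scalar inequality holds at the worst case $k=0$ (e.g.\ $\alpha_S\le \mu^2/(L^2 q_0+\sigma^2)$ suffices), which is precisely how the constant is pinned down in the cited reference.
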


The convergence rate of SGD is slow: the error decreases on the order of $1/k$ for strongly convex problems.  This means that if it takes $1000$ iterations to reach an error of $\epsilon$, it may take ten times as many iterations to further decrease the error to $\epsilon /10$.    However, the the tradeoff of speed for memory is worth it: while the number of iterations to achieve a small error is large, the algorithm overcomes the memory bottleneck, which would make computing the full gradient of the loss function impractical.  

\subsection{Accelerated SGD}
In practice, better empirical results are achieved using the accelerated version of SGD.  This algorithm is the stochastic version of Nesterov's accelerated gradient descent \cite{nesterov2013introductory}.  See 
 \cite{goh2017why} for an exposition on Nesterov's method. 
Nesterov's method  can be interpreted as the discretization of a second order ODE~\cite{su2014differential}.  
In \cite{MaximeN2N} we show how the continuous time interpretation of Nesterov's method with stochastic gradients leads to accelerated convergence rates for Nesterov's SGD, using a Liapunov function analysis similar to the one described above. 

\begin{figure}
  \includegraphics[width = 3.5cm]{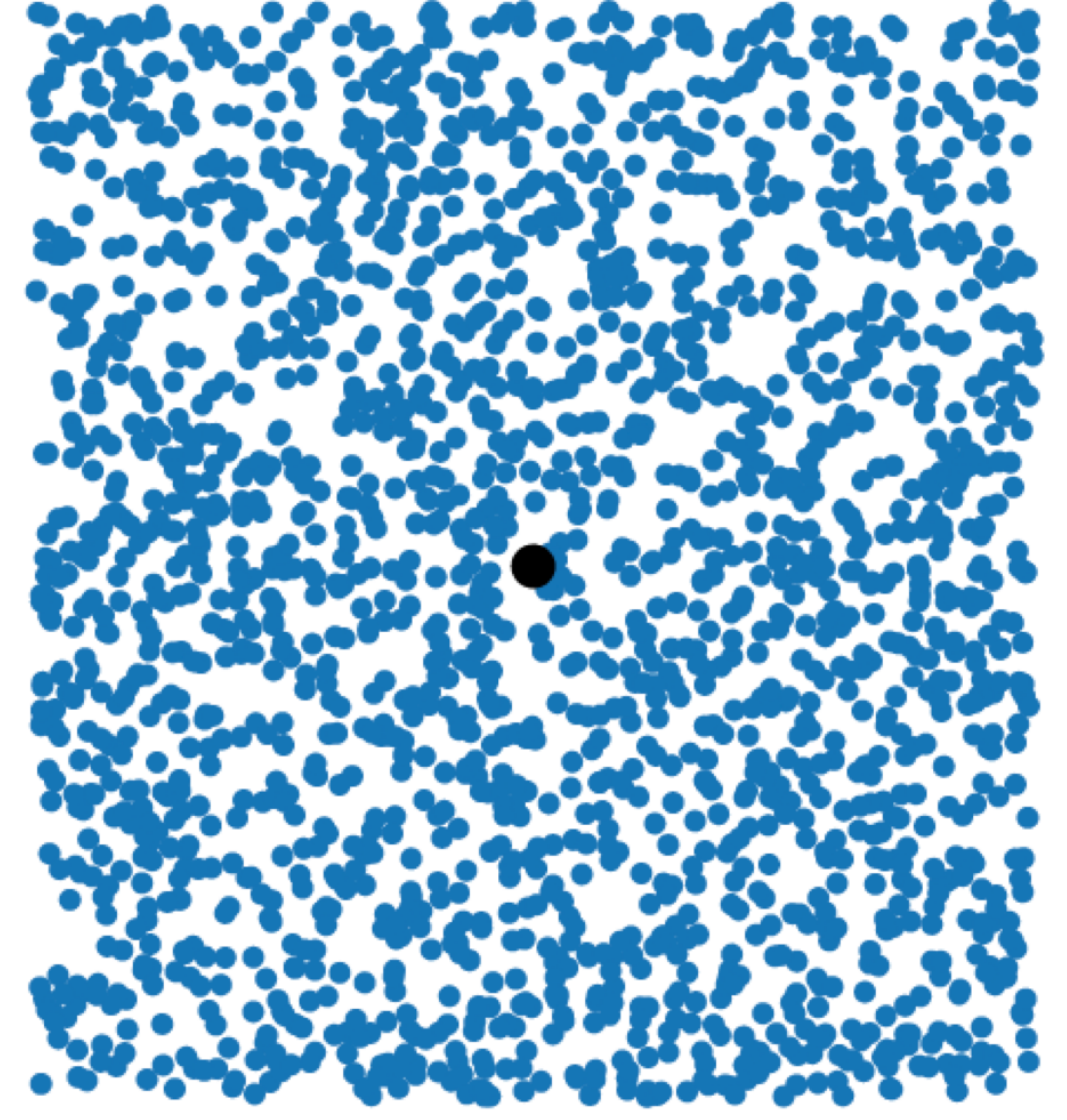}
    \includegraphics[width = 3.5cm]{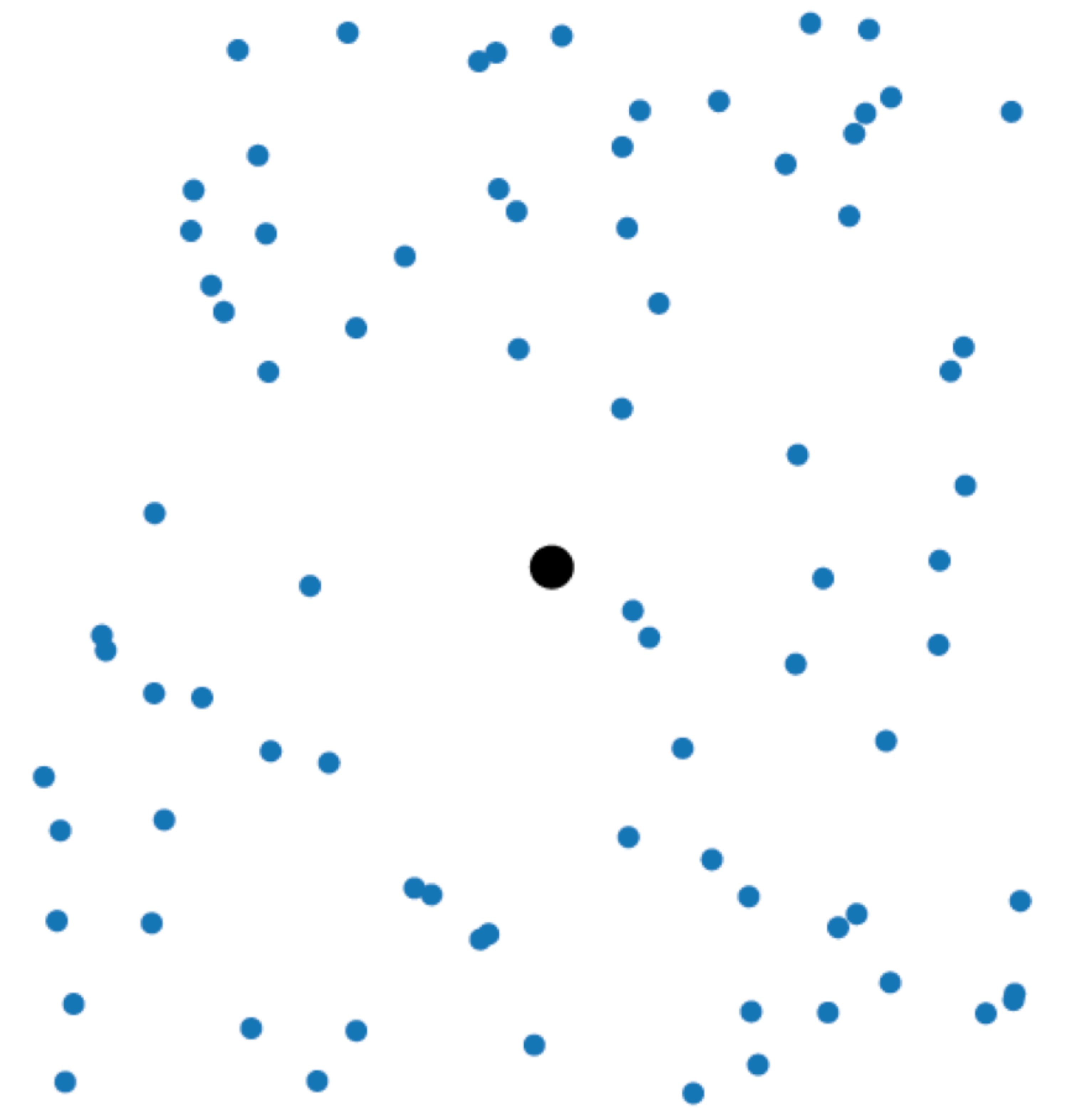}
\includegraphics[width = 5.1cm]{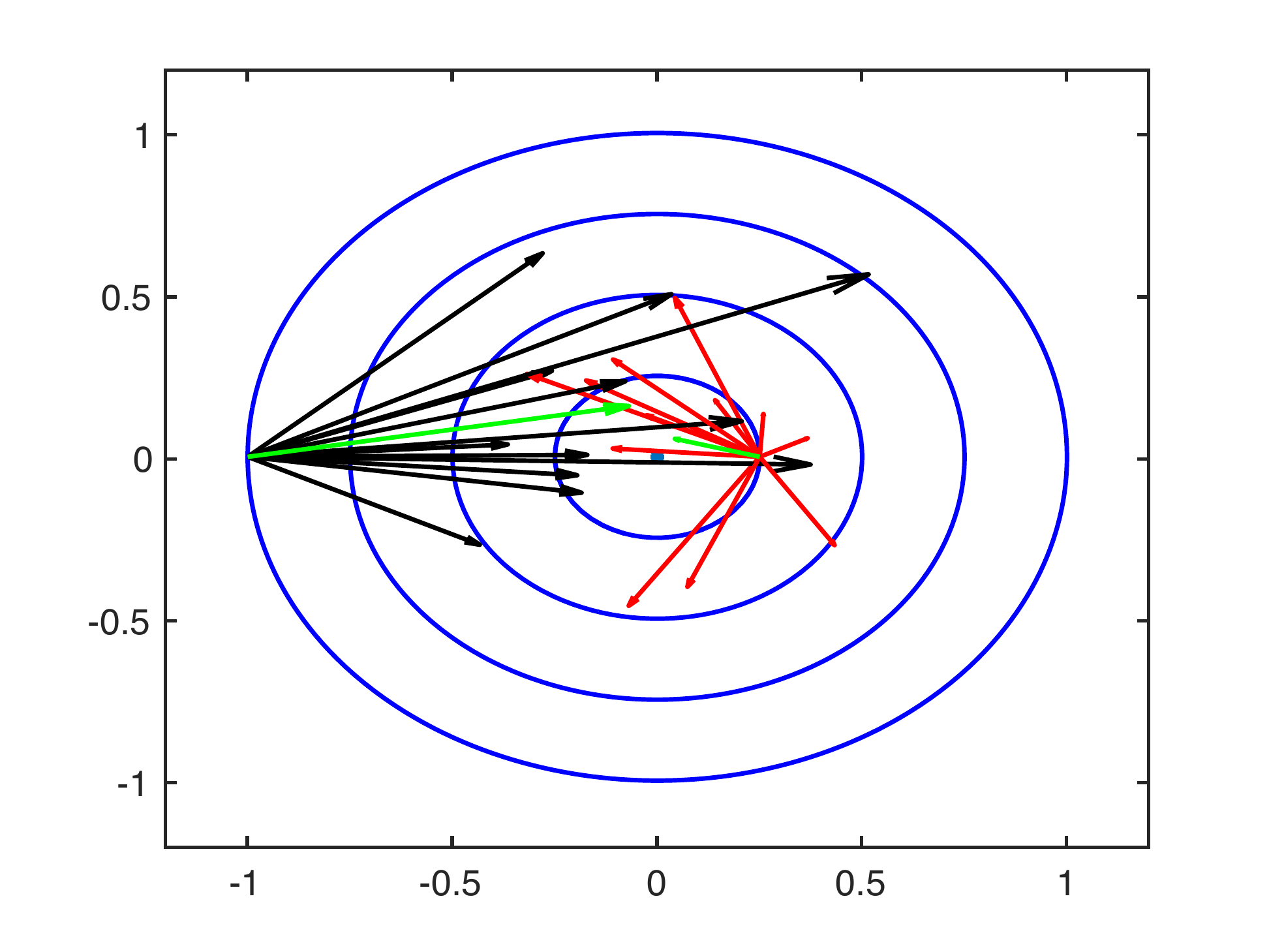}
  \caption{Full data set and a minibatch for  data sampled uniformly in the square.  Component gradients (black) and the minibatch gradient (green).  Closer to the minimum, the relative error in the minibatch gradient is larger.}
  \label{fig:minibatch}
\end{figure}

\section{Statistical learning theory}\label{sec:Stats}

\subsection{Concentration of measure}


Consider the experiment of flipping a possibly biased coin.  Let $X_k \in \{-1,1\}$ represent the outcomes of the coin toss.   After $m$ coin tosses, let $S_m = \frac 1 m \sum_{k=1}^m X_k$ be the sample mean.  The expected value of  $X$, $\mu = \EE[X]$ is the difference of the probabilities, $p_H - p_T$, which is zero when the coin is fair.   In practice, due the randomness, the sample mean will deviate from the mean, and we expect the deviation to decrease as $m$ increases. The Central Limit Theorem quantifies the deviation  $\sqrt{m}(S_m - \mu)$ converges to the normal distribution as $m \to \infty$, so that,
\begin{equation}\label{rough}
|S_m - \mu|   \approx   \frac 1 {\sqrt{m}}
\end{equation}
for $m$ large. Concentration of measure inequalities provide non-asymptotic bounds.  For example, Hoeffdings inequality~\cite[Appendix D]{mohri2018foundations} applied to random variables taking values in $[-1,1]$ gives
\[
P\left (|S_m - \mu | >  \epsilon \right )  \le 2 \exp \left (  - \frac{m \epsilon^2}{2} \right ) 
\] 
for any $\epsilon > 0$.   Setting $\delta = 2 \exp \left (  - {m \epsilon^2}/{2} \right )$ and solving for $\epsilon$  allows us to restate the result as 
\begin{equation}
	\label{berstien_log}
	|S_m - \mu | \leq \sqrt{  \frac{2\log(2/\delta) }{m}}
	\quad \text{with probability $1-\delta$, for any $\delta > 0$}.	
\end{equation}

\subsection{Statistical learning theory and generalization}
Statistical learning theory (see \cite[Chapter 3]{mohri2018foundations} and \cite[Chapter 4]{shalev-shwartz_ben-david_2014}) can be used to obtain dimension independent sample complexity bounds for  the expected loss (generalization) of a learning algorithm. These are bounds which depend on the number of samples, $m$ but not on the dimension of the underlying data, $n$.

 The \emph{hypothesis space complexity} approach restricts the  hypothesis space \eqref{Hdefn}  to limit the ability of functions to overfit by bounding the  the generalization gap  ${L}^{\textrm{gap}} [f]	 := \LD[f] - {L}_{S}[f]$.
The dependence of the generalization gap  on the learning algorithm is removed by considering the worst-case gap for functions in the hypothesis space 
\begin{equation}
 {L}^{\textrm{gap}}[f_{\mathcal A(S)}]  \leq \sup_{f \in \mathcal H}  {L}^{\textrm{gap}} [f]	 
\end{equation}
For example, \cite[Theorem 3.3]{mohri2018foundations} (which applies to the case of bounded loss function $0 \leq \loss \leq 1$),  states that for any $\delta > 0$
\begin{equation}\label{gen_2}
	 {L}^{\textrm{gap}}[f_{\mathcal A(S)}]  \leq \mathfrak{R}_m(\mathcal H) +  \sqrt{ \frac{\ln \frac 1 \delta}{2m }}, \quad \text{ with probability $\geq 1- \delta$}
\end{equation}
where $\mathfrak{R}(\mathcal H)$ is the Rademacher complexity of the
hypothesis space $\mathcal H$. 
Observe that \eqref{gen_2} has a similar form to \eqref{berstien_log}, with the additional term coming from the hypothesis space complexity.  Thus, restricting to a low-complexity hypothesis space reduces \emph{learning} bounds to \emph{sampling} bounds.   

The Rademacher complexity of $\mathcal H$ measures the probability that some function $f \in \mathcal H$  is able to fit samples of size $m$ with random labels~\cite[Defn 3.1 and 3.2]{mohri2018foundations}.  
\begin{example}
Consider the hypothesis space consisting of the set of axis aligned rectangles in the plane.  Set $x = (x_1, x_2), w = (w_1, w_2, w_3, w_4)$ and 
\[
f(x,w) = \begin{cases}
	1 & \text{ if  $w_1 \leq x_1 \leq w_2$, and $w_3 \leq x_2 \leq w_4$}\\
	0 & \text{ otherwise } 
\end{cases}
\]
and consider the hypothesis space given by 
\[
\mathcal H = \{ f(x,w) \mid w_1 \leq w_2, w_3 \leq w_4 \}
\]
Then $\mathfrak{R}(\mathcal H) \to 0$ as $m\to\infty$. 
\end{example}

\subsubsection*{Stability approach to generalization}
The \emph{stability} approach to generalization \cite[Chapter 13]{shalev-shwartz_ben-david_2014}  \cite{bousquet2002stability} considers perturbations of the data set $S$.  It measures how much changing a data point in $S$ leads to a change in $f_{\mathcal A(S)}$.  It also leads to estimates of the form \eqref{gen_2}, with the Rademacher complexity of the hypothesis space replaced by the uniform stability of the algorithm. For example, see \cite[Ch 13 and Ch 15]{shalev-shwartz_ben-david_2014} for ridge-regression and support vector machines.

\subsubsection*{Robustness  approach to generalization}
The \emph{robustness} approach  \cite{xu2012robustness} considers how much the loss value can vary with respect to the input space of $(x,y)$.   They define an algorithm $\mathcal A$ to be $(K, \epsilon(S))$-robust if the dataset can be partitioned into $K$ disjoint sets $\{C_i\}_{i=1}^K$ and there is a function $\epsilon(S) \geq 0$ such that for all $s \in S$
\begin{equation}
	s,z \in C_i \implies | \loss(f_{\mathcal A(S)}, s) - \loss(f_{\mathcal A(S)}, z) | \leq \epsilon(S)
\end{equation}
For $(K, \epsilon(S))$-robust algorithms, the result is
\begin{equation}\label{robust_gen}
	 {L}^{\textrm{gap}}[f_{\mathcal A(S)}] \leq \epsilon(S) + 2 M\sqrt{\frac{ {K \ln 2 + \ln \frac 1 \delta} }{2m}   }, \quad \text{ with probability $\geq 1- \delta$}
\end{equation}
So the result \eqref{robust_gen} trades robustness for Rademacher complexity, with the addition of a term measuring the number of sets in the partition. 

However, if we consider an $L$-Lipschitz function $f$, then the function is $\epsilon$ robust for a partition of the set by balls of radius $\epsilon/L$.  However the number of such balls depends on the dimension, $K = (1/\epsilon)^d$, so, in this case, the curse of dimensionality is still there, but absorbed into the constant $K$.

\section{Deep Neural Networks}\label{sec:Architecture}
 The deep learning hypothesis class is nonlinear and nonconvex in both $w$ and $x$.   This is different from support vector machines, which are affine in both variables, and kernel methods, which are affine in $w$.  The nonconvexity makes the analysis of the parametric problem \eqref{ELW} much more complicated.  On the other hand,  studying \eqref{ELR} in the nonparametric setting allows us to analyze the existence, uniqueness and stability of solutions, without the additional complicating details of the parameterization. 

\subsection{Network architecture}
The architecture of a deep neural network refers to the definition of the hypothesis class.  The function $f(x;w)$ is given by a composition of linear (affine) layers with a nonlinearity, $\sigma(t)$,  which is defined to act componentwise on vectors
\[
 \sigma(y_1,\dots, y_n) \equiv (\sigma(y_1), \dots, \sigma(y_n)).
\] 
In modern architectures, the nonlinearity is the rectified linear unit (ReLU), $\sigma(t) = \max(t,0)$.  Define the $i$th layer of the network using (consistently sized) rectangular matrix, $W_i$, and bias vector,  $b_i$ composed with the nonlinearity
\[
f^{(i)} = \sigma(W_i x + b_i).
\]
The neural network with $J$ layers is given by the composition of layers with consistent dimensions,
\begin{equation}
	f(x,w) = f_J \circ \dots \circ f_1, \quad
	w = (W_1, \dots, W_J) 
\end{equation}
where the parameter $w$ is the concatenation of the matrices in each layer $w = (W_1, \dots, W_J)$.  The network is deep when $J$ is large.  

Convolutional neural networks \cite[Chapter 9]{goodfellow2016deep} constrain the matrices to be sparse, leading to significant reduction in the total number of parameters.  The basic convolutional network has a sparsity structure of 9 points, corresponding to nearest neighbors on a two by two grid.  The weights are repeated at different grid points in such a way that the  matrix vector product, $Wx$, is equivalent to a convolution.

\subsection{Backpropagation}
Backpropagation is the terminology for symbolic differentiation of a neural network.  Using the parametric representation of the mode,  we can compute the full gradient vector 
\[
\nabla f = (\nabla_w f(x,w), \nabla_x f(x,w))
\]
using the chain rule.  The first component, $\nabla_w f(x,w)$ is used to optimize the weights for training.

\subsection{DNN Classification}
For classification problems with $K$ classes, the final layer of the neural network is $K$ dimensional and the classification is given by~\eqref{class}.   Here, we interpret the DNN classification loss as a smoothed version of the max loss \eqref{max_loss}.

The standard loss function for DNN classification is the composition of the Kullback-Liebler divergence with the softmax function
\[
\softmax(f) = \frac{1}{ \sum_{i=1}^K \exp(f_i)} (\exp(f_1), \dots, \exp(f_K)).
\]
The Kullback-Leibler divergence from information theory~\cite[Section 1.6]{bishop2006pattern} \cite[Section 3.13]{goodfellow2016deep} is defined on probability vectors, by 
\begin{equation}
\loss_{KL}(q,p) = - \sum_{i=1}^K p_i \log(q_i/p_i),
\end{equation}
When $y_i$ is the one hot vector (the standard basis vector $e_i$),  $(0, \dots, 1, \dots, 0)$ the composition results in 
\begin{equation}
\label{KLcomp}\tag{KL-SM}
\loss_{KL-SM}(f,k) = -\log(\softmax(f)_k) =  \log \left ( \sum \exp(f_i) \right) - f_k	
\end{equation}

The usual explanation for the loss is the probabilistic interpretation\footnote{see ``An introduction to entropy, cross entropy and KL divergence in machine learning,''``KL Divergence for Machine Learning," and ``Light on Math Machine Learning: Intuitive Guide to Understanding KL Divergence''} which applies when classifiers output, $p(x) = (p_1(x), \dots, p_K(x))$, an estimate of the probabilities of that the image $x$ is in the class $i$.  The probabilistic explanation is not valid, since (i) the KL-divergence is composed with a (hard coded, not learned) $\softmax$,  and (ii) there is no probabilistic interpretation of $\softmax(f)$.

Instead, we return to the maximum loss, \eqref{max_loss}, and consider a smooth approximation of $\max(v)$ given by $g^\epsilon(v)$ 
\begin{equation}
	\label{max_loss_smooth}
	\loss_{\max^\epsilon}(f,k) = g^\epsilon(f) - f_k
\end{equation}
For example, if we take
\begin{equation}
\label{smooth_max}
g^\epsilon(v) = \epsilon \log \left ( \sum_{i=1}^K \exp(f_i/\epsilon ) \right) 
\end{equation}
Then $g^\epsilon$ is smooth for $\epsilon>0$ and $g^\epsilon \to \max$ as $\epsilon \to 0$, since
\begin{equation}
	\max_i v_i \leq g^\epsilon(v) \leq \epsilon \log K  + g^\epsilon(v)
\end{equation}
which follows from 
\begin{align}
	\max_i v_i &= \log(  \exp( \max_i(v_i))) \leq  \log \left ( \sum_{i=1}^K \exp(f_i) \right)\\
	&
	 \leq \log ( K \exp( \max_i v)_i) = \max(v) + \log K.	
\end{align}
Using \eqref{max_loss_smooth} with  $\epsilon = 1$ in \eqref{smooth_max} we recover \eqref{KLcomp}.  We thus interpret the \eqref{KLcomp} loss as a smooth version of the classification loss \eqref{max_loss}. 

\begin{figure}
\centering
\includegraphics[width = 12cm]{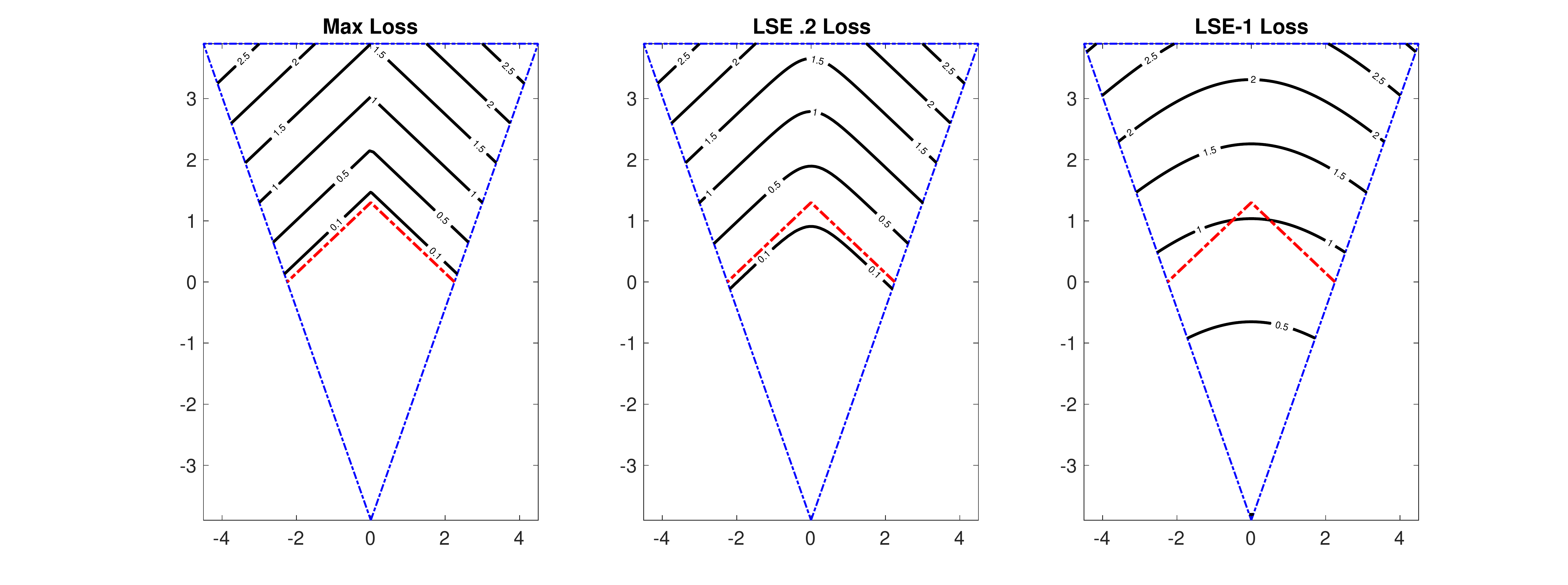}
  \caption{Classification Losses: level sets of different classification losses, 
 along with the classification boundary.  From left to right: max loss, LSE loss with $\epsilon = .2$ , LSE loss with $\epsilon = 1$.}
  \label{fig:losses}
\end{figure}

\subsection{Batch normalization}
Deep models require batch normalization \cite{ioffe2015batch}, which was introduced to solve the vanishing gradient problem, which arises when  training using the ReLU activation function.  In the case that
 \[
 W_j x_i < 0, \text{ for all } i = 1, \dots, m
 \]
 where $x < 0$ means that each component of the vector is negative, then $\sigma(W_j x)$ will always be zero, and $\nabla_w f(x) = 0$.  This is a problem even if the inequality above holds only for a particular mini-batch.  Then gradients are zero and  the network cannot update the weights.  Proper initialization can correct this problem, but for deep networks, batch normalization is still required for accuracy. 
  
  Batch normalization consists of adding a layer which has a shift and scaling to make the outputs of a layer mean zero and variance one.  Thus $f(x,w)$ now depends on the statistics of $S_m$, which is no longer consistent with the hypothesis class definition.  It is possible to enlarges the hypothesis class to include statistics of the data, but this makes the problem \eqref{ELH} more complicated.

\section{Adversarial attacks}

\begin{figure}
    \centering
        \includegraphics[width=.45\textwidth]{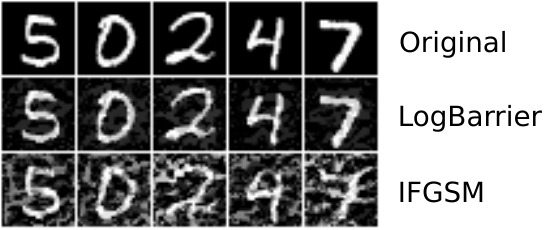}
        \includegraphics[width=.45\textwidth]{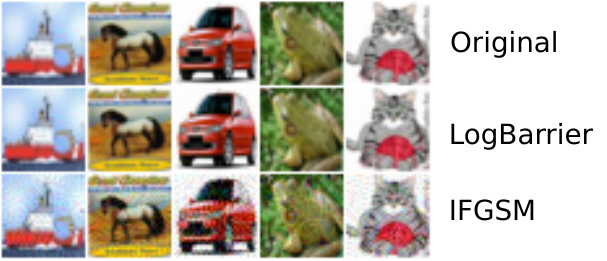}
    \caption{Adversarial images for  $\ell_\infty$ perturbations, generated by classification (LogBarrier) and loss (IFGSM) adversarial attacks, compared to the
      original clean image, for MNIST (left) and CIFAR10 (right). Where IFGSM has
    difficulties finding low distortion adversarial images, the classification attack succeeds.}\label{fig:pics}
\end{figure}

Robustness of $f(x,w)$ refers to the lack of sensitivity of the model to small changes in the data.  The existence of adversarial examples \cite{goodfellow2014explaining} indicates that the models are not robust.  
The objective of adversarial attacks is to find the minimum norm vector which leads to misclassification, 
\begin{equation}\label{class_attack}
	\min \left\{ \|v\| \mid  C(f(x+v))  \not=C((f(x))  \right\}.
\end{equation}
The classification attack \eqref{class_attack} corresponds to a global, non-differentiable optimization problem.  A more tractable problem is given by attacking a loss function. The choice of loss used for adversarial attacks can be a different one from \eqref{KLcomp} loss used for training.  The max-loss \eqref{max_loss}, as well as a smoothed version, was used for adversarial attacks on deep neural networks in~\cite{carlini_towards_2016}.  Adversarial attacks are illustrated in Figure~\ref{fig:pics}.

Write $\ell(x) = \loss(f(x),y(x))$ for the loss of the model.  For a given adversarial distance, $\lambda$, the optimal loss attack on the image vector, $x$, is defined as
\begin{equation}
\label{ideal_attack}
\max_{ \|v\| \leq \lambda} \ell(x + v).
\end{equation}
The solution, $x+v$,  is the perturbed image vector within a given distance of $x$ which maximally increases the loss.  The Fast Gradient Sign Method (FGSM) \cite{goodfellow2014explaining} arises when
attacks are measured in the $\infty$-norm.  It corresponds to a one step attack in
the direction $v$ given by the signed gradient vector
\begin{align} \label{eq:l1_dual_vector}	
v_i = \frac{ \nabla \ell(x)_i }{| \nabla \ell(x)_i|}.
\end{align}
The attack direction \eqref{eq:l1_dual_vector} arises from linearization of the objective in \eqref{ideal_attack}, which leads to 
\begin{equation}\label{norm_infty_dual}
	\max_{ \|v\|_\infty \leq 1}  v \cdot \nabla \ell(x) 
\end{equation} 
By inspection, the minimizer is the signed gradient vector,
\eqref{eq:l1_dual_vector}, and the optimal value 
is $\| \nabla \ell(x)\|_1$.  When the  $2$-norm is used in
\eqref{norm_infty_dual}, the optimal value is $\| \nabla
\ell(x)\|_2$ and the optimal direction is the normalized gradient
\begin{align} \label{eq:l2_dual_vector}
v =   \frac{ \nabla \ell(x) }{\| \nabla \ell(x) \|_2}.
\end{align}
More generally, when a generic norm is used in \eqref{norm_infty_dual}, the
maximum of the linearized objective defines the dual norm~\cite[A.1.6]{boyd2004convex}.

\subsection{Classification attacks}
In \cite{AramAttack}  we implemented to the barrier method from constrained optimization \cite{nocedal} to perform the classification attack \eqref{class_attack}.  While attacks vectors are normally small enough to be invisible, for some images, gradient based attacks are visible.  The barrier attack method generally performs as well as the best gradient based attacks, and on the most challenging examples results in smaller attack vectors,  see Figure~\ref{fig:LogBarrier}.
\begin{figure}
  \includegraphics[height = 4.0cm]{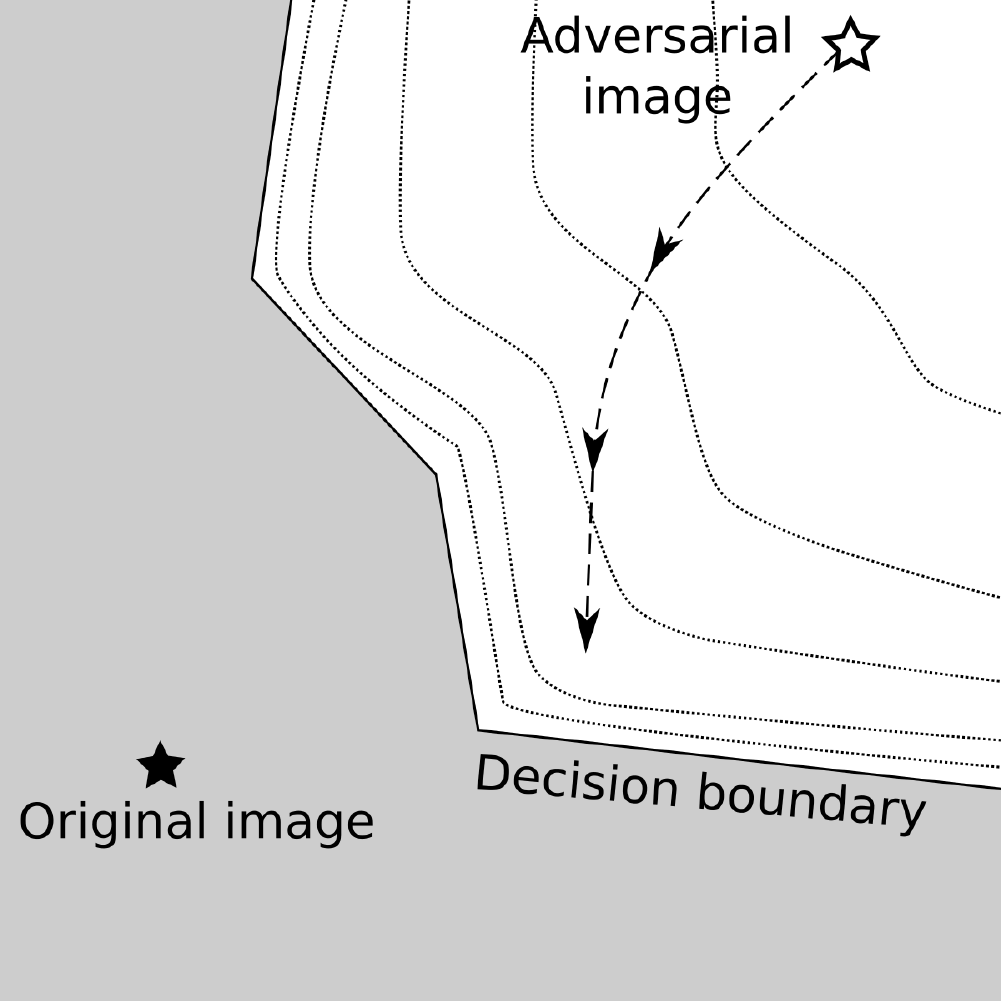}
    \caption{Illustration of an adversarial attack on the classification boundary.}
  \label{fig:LogBarrier}
\end{figure}

\section{Regularization in DNN}
\subsection{Statistical Learning Theory approach to generalization in Machine Learning}
In practice, neural networks can perfectly fit random labelings of the data  \cite{zhang2016understanding}.  
Thus, the Rademacher complexity of the neural network hypothesis class is one which means that generalization of neural networks can not be established using the hypothesis space complexity approach.    The work of \cite{zhang2016understanding} did not use data augmentation, which is typically used when training networks to achieve better generalization.   In the sequel we will study when data augmentation can be interpreted as a form of regularization, with the hope that it can be used to better understand generalization of deep neural networks. 

\subsection{Regularization in DNN practice}
Various forms of \emph{algorithmic regularization} are implemented to improve generalization~\cite[Chapter 7]{goodfellow2016deep}.   Examples follow, which will then be interpreted as variational regularization. 

Early work on shallow neural networks implemented Tychonoff gradient regularization of the form  \eqref{ELR} directly: \cite{drucker1992improving} showed that it improved generalization error.  In ~\cite{bishop1995training}, it was shown that regularization can be achieved by adding random noise to the data $x$, which avoids the additional computational cost of explicit gradient regularization. 

Data augmentation~\cite{lecun1998gradient} improves generalization using simple image transformations such as small rotations, cropping, or intensity or contrast adjustment.  

Dropout~\cite{srivastava2014dropout} consists of randomly, with small probability, changing some model weights to zero.   Dropout  is another form of regularization, but this time the transformation is a perturbation of the model weights. A Bayesian interpretation of dropout \cite{gal2016dropout} was later retracted \cite{hron2017variational}.
  
 Cutout~\cite{cutout}  consists of randomly sending a smaller square of pixels in the image to zero (or one), and recently outperformed dropout on benchmarks.  

Mixup~\cite{mixup} consists of taking convex combinations of data points $x_i,x_j$ and the corresponding labels $y_i,y_j$. 
 
Gaussian noise averaging has recently reappeared in deep neural networks \cite{lecuyer2018certified, liu2018towards, 1809.03113, cohen2019certified} as a method to certify a network to be robust to adversarial perturbations.   The averaged network adds random noise with variance $\sigma^2$ and chooses the most likely classifier,
\[
C_{smooth}(x) = \argmax C(x + \eta),\qquad \eta = N(0,\sigma^2),
\]
which  requires many evaluations of the  network.

\section{PDE regularization interpretation}
In this section we demonstrate how  algorithmic or implicit regularization approaches to deep learning can be made explicit.   The first point we make is that  we can rewrite the kernel loss as in the form of \eqref{ELR}.   This suggests that generalization of  kernel methods could also be approached from the point of view of regularization.  The regularization functional \eqref{ELR} is non-parametric, which brings it closer to the approach we take for highly expressive  deep neural networks.  Next we show how various forms of  implicit regularization used in DNNs can be made explicit. 

\subsection{Kernels and regularization}
Observe that \eqref{standard kernel opt} corresponds to \eqref{ELW} with a regularization term involving the Hilbert space norm of the weights.  
Mathematically,  \eqref{standard kernel opt} is a strange object, because it involves functions, weights and the Hilbert space norm.  
We will rewrite in the form of a regularized functional \eqref{ELR}.   Regularization interpretation of kernels is discussed in \cite{girosi1995regularization, smola1998regularization, wahba1990spline}.

Consider the case where $K(x_1,x_2) = G(x_1-x_2)$ where $G$ is real and symmetric, and the Fourier transform $\hat G(y)$ is a symmetric, positive function that goes to zero as $y \to \infty$.  Then it can be shown that~\eqref{standard kernel opt} corresponds to \eqref{ELR} with 
\begin{equation}
	\label{Reg_Fourier}
	\mathcal R^{Ker}(f) = \int_{\Rn} \frac{ \|\hat f(y)\|^2 }{\hat G(y)} \, dy,
\end{equation}
where $\hat f$ is the Fourier transform of $f$.  Refer to~\cite{girosi1995regularization}.

\begin{example}\label{ex:Gaussian}
 See \cite{smola1998regularization} for details.
	The Gaussian kernel corresponds to 
\begin{equation}
	G(x)  = \exp(-\|x\|^2/2), \qquad  \hat G(y) = C \exp(-\|y\|^2/2).
\end{equation}
 In this case, the regularization is given by
\[
\mathcal R^{Ker}(f) =  \sum_{n=0}^\infty \frac{ 1}{2^n n!}\| (\nabla)^n f \|_{L^2}^2.
\]
%
\end{example}
Thus we see that kernel methods can be interpreted as regularized functional \eqref{ELR} with Fourier regularization.

\subsection{SGD and regularization}
Some authors seek to explain generalization using the properties of the SGD training algorithm, for example~\cite{hardt2015train}.  We consider two examples  in the overparameterized setting, where the number of parameters can be greater than the number of data points.

The first example shows that without regularization, SDG can train to zero loss and fail to generalize.  The second example shows that implicit regularization by smoothing the function class can lead to generalization.  The point of these  examples  to show that SGD does not regularize without additional implicit regularization.

\begin{example}
Consider the dataset $S_m$, defined in~\eqref{dataset},  with $x_i$ i.i.d.\ samples from the uniformly probability $\rho_1(x)$ on  $[0,1]^2$, the two dimensional unit square.  Set 
\[
y(x)  = \rho_1(x) 
= \begin{cases}
	1, & x \in [0,1]^2\\
	0, & \text{ otherwise} 
\end{cases}
\]
Consider the quadratic loss \eqref{ELM mean} with the overparameterized hypothesis space 
\begin{equation}\label{H over parameterized}
	 \mathcal H^\delta = \left \{ f(x,w) = \sum_{i=1}^m w_i \delta_{x_i}(x)  \right \}
\end{equation}
where   $\delta_{x_i}(x) = 1$ if $x=x_i$, and zero otherwise. 	
\end{example}
In this case, it is clear that (i) setting $w^*_i = 1$ makes the empirical loss zero, and  (ii) the generalization error is large, since on a new data point, $x$, $f(x,w^*) = 0  \not = y(x)$.  

However, if we replace $\mathcal H^\delta$ with a smooth hypothesis class
\begin{equation}\label{Hsmooth}
	 \mathcal H^G = \left \{ f(x,w) = \sum_{i=1}^m w_i G(x_i-x)  \right \}
\end{equation}
where $G$ is a smooth approximation of the delta function (for example a Gaussian kernel as in Example~\ref{ex:Gaussian}), then it is possible to generalize better.   Adding a parameter representing the width of the kernel allows us to tune for the optimal width, leading to better regularization for a given sample size. 

Learning guarantees for classification using kernels are usually interpreted using the notion of margin \cite[Ch 6]{mohri2018foundations}.  However, this simple example illustrated that the interpretation \eqref{ELR} can also be used, with the kernel width corresponding to the regularization parameter $\lambda$.  

\subsection{Image transformation and implicit regularization}
Consider an abstract data transformation, 
\begin{equation}
	\label{data_augment}
x \mapsto T(x)
\end{equation}
which transforms the image $x$.  This could be data augmentation, random cutout, adding random gaussian noise to an image, or an adversarial perturbation.    The data transformation replaces \eqref{ELH} with the data augmented version
\begin{equation}\label{EL-A}\tag{EL-A}
	\min_{f\in \mathcal H} \frac{1}{m} \sum_{ i = 1}^m \loss (f(T(x_i)),y_i) 
\end{equation}
which we can rewrite as  
\[
\min_{f\in \mathcal H} L_{S_m}(f)+ \mathcal R^{T}(f(x_i))
\]
 where 
\begin{equation}\label{RT}\tag{RT}
	\mathcal R^T(f(x_i)) = \loss (f(T(x_i)),y_i) -  \loss (f(x_i),y_i)
\end{equation}
Here the regularization is implicit, and the strength of the regularization is controlled by making the transformation $T$ closer to the identity.

  \subsection{Data augmentation}
Here show how adding noise leads to regularization, following~ \cite{bishop1995training}.

\begin{lemma}
	The augmented loss problem \eqref{EL-A} with quadratic loss  and additive noise \begin{equation}\label{ExV}
	T(x) = x + v, \quad  \text{$v$ random}, \quad \EE(v) = 0, \quad \EE( v_i v_j ) = \sigma^2
\end{equation}
 is equivalent to the regularized loss problem \eqref{ELR} with
 \begin{equation}\label{Noise_Tychonoff}
		\mathcal R^{noise}(f) =  \frac{\sigma^2}{m} \sum_{ i = 1}^m \left ( f_x^2 +(f-y) f_{xx} + \frac{\sigma^2}{4} f_{xx}^2  \right )
	\end{equation}
\end{lemma}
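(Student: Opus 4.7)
The plan is a direct second-order Taylor expansion argument following \cite{bishop1995training}. Starting from \eqref{EL-A} with $T(x)=x+v$ and quadratic loss, the augmented empirical loss is
\begin{equation}
L_{S_m}^{aug}(f) \;=\; \frac{1}{m}\sum_{i=1}^m \mathbb{E}_v\!\left[(f(x_i+v) - y_i)^2\right].
\end{equation}
I would expand $f(x_i+v) = f(x_i) + v\,f_x(x_i) + \tfrac12 v^2 f_{xx}(x_i) + O(v^3)$, substitute into the squared loss, and average over $v$ using the moment hypotheses \eqref{ExV}.

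Writing $g_i := f(x_i) - y_i$ and expanding the square gives
\begin{equation}
(f(x_i+v)-y_i)^2 \;=\; g_i^2 + 2 g_i v f_x + g_i v^2 f_{xx} + v^2 f_x^2 + v^3 f_x f_{xx} + \tfrac14 v^4 f_{xx}^2 + O(v^5).
\end{equation}
Using $\mathbb{E}[v]=0$, $\mathbb{E}[v^2]=\sigma^2$, and (for symmetric noise) $\mathbb{E}[v^3]=0$, the odd-moment terms drop and the surviving contributions combine into
\begin{equation}
\mathbb{E}_v[(f(x_i+v)-y_i)^2] \;=\; g_i^2 + \sigma^2\bigl(f_x^2 + g_i f_{xx}\bigr) + \tfrac14\,\mathbb{E}[v^4]\,f_{xx}^2.
\end{equation}
Averaging over $i$ and subtracting the original empirical loss $L_{S_m}(f)=\tfrac1m\sum g_i^2$ yields exactly $\mathcal R^{noise}(f)$ of \eqref{Noise_Tychonoff}, provided $\mathbb{E}[v^4]=\sigma^4$ so that the last contribution enters with coefficient $\sigma^2/4$ inside the bracket. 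The augmented problem then coincides with \eqref{ELR} for the regularizer $\mathcal R^{noise}$ (with $\lambda=1$).

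The only delicate point, and the place where I would flag an implicit hypothesis, is the fourth-moment assumption. Retaining the $\tfrac{\sigma^2}{4} f_{xx}^2$ term as an equality requires either a specific noise law, such as symmetric two-point noise $v=\pm\sigma$ for which $\mathbb{E}[v^4]=\sigma^4$, or reading the conclusion as asymptotic in $\sigma$ with the leading Tychonoff term $\sigma^2 f_x^2$ and the $\sigma^2(f-y)f_{xx}$ correction being exact and the remaining $f_{xx}^2$ contribution fixed modulo the choice of distribution. The cross term $v^3 f_x f_{xx}$ likewise vanishes only when the noise is symmetric, which I would add to the hypotheses in \eqref{ExV}. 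The multidimensional version is identical with $f_x^2 \mapsto \|\nabla f\|^2$ and $f_{xx} \mapsto \Delta f$, using the covariance structure $\mathbb{E}[v_iv_j]=\sigma^2\delta_{ij}$. Once these hypotheses are stated, the proof is a mechanical expand-and-take-expectation calculation.
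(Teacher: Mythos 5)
Your proof is essentially identical to the paper's: the same one-dimensional second-order Taylor expansion of $f(x_i+v)$ inside the quadratic loss, followed by expanding the square and taking expectations term by term. Your extra care about the moment hypotheses is warranted rather than pedantic --- the paper's proof only says that the odd powers of $v$ drop out, which silently assumes $\EE[v^3]=0$ beyond what \eqref{ExV} states, and it never addresses the surviving quartic term, whose expectation $\tfrac14\EE[v^4]\,f_{xx}^2$ matches the $\tfrac{\sigma^4}{4}f_{xx}^2$ appearing in \eqref{Noise_Tychonoff} only under an additional fourth-moment assumption such as $\EE[v^4]=\sigma^4$ (for Gaussian noise the coefficient would be $\tfrac{3\sigma^4}{4}$), so the hypotheses you flag are gaps in the lemma as stated, not in your argument.
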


\begin{proof}
For clarity we treat the function as one dimensional. A similar calculation can be done in the higher dimensional case. Apply the Taylor expansion 
\[
f(x+v) = f(x) + vf_x + \frac 1 2 v^2 f_{xx} + O(v^3)
\] 
to the quadratic loss $\loss(f,y) = (f(x + v) - y)^2$.  	Keeping only the lowest order terms, we have 
	\begin{align}
		(f(x + v) - y)^2 &= (f(x) - y)^2 + 2(f_x v + \frac 1 2 v^2 f_{xx})(f(x)-y) + (f_x v + \frac 1 2 v^2 f_{xx})^2 
	\end{align}
Taking expectations and applying \eqref{ExV} to drop the terms with odd powers of $v$ gives \eqref{Noise_Tychonoff}.
\end{proof}

\begin{remark}
	Note that this regularization is   empirical, unlike the previous ones. So the regularization may not lead to a well posed problem.  A more technical argument could lead to a well-defined functional, where the density depends on the sampled density convolved with a Gaussian.
\end{remark}

\subsection{Adversarial training}
In \cite{FinlayCalder} it was shown that adversarial training, 
\[
T(x) = x + \lambda v
\]
with attack vector  $v$ given by \eqref{eq:l1_dual_vector}	or by \eqref{eq:l2_dual_vector} 
can be  interpreted as Total Variation regularization of the loss,  
\begin{equation}\label{Noise_TV}
		\mathcal R^{AT}(f) = \lambda \|\loss'(f) \nabla f(x) \|_*.
\end{equation}

A different scaling was considered in \cite{FinlayScaleable}, which corresponds to adversarial training with $
T(x) = x + \lambda \nabla \loss(x).$ The corresponding regularization is Tychonoff regularization of the loss, 
\begin{equation}\label{Noise_T2}
		\mathcal R^{Tyc}(f) = \lambda  \| \loss'(f) \nabla f(x) \|^2_2
\end{equation}
which was implemented using finite differences rather than backpropagation. 
Double backpropagation is computationally expensive because the loss function depends on $\nabla_x f(x,w)$, training requires a mixed derivative $\nabla_x (\| \nabla_x f \|^2)$.  For deep networks, the mixed derivatives leads to very large matrix multiplication.  Scaling the regularization term by $\lambda$ and allowed for robustness comparable to many steps of adversarial training, at the cost of a single step.

\section{Conclusions}
We studied  the regularization approach to generalization and robustness in deep learning for image classification.   Deep learning models lack the theoretical foundations of traditional machine learning methods, in particular dimension independent sample complexity bounds of the form \eqref{gen_2}.  We showed that kernel methods have a regularization interpretation, which suggests that the same bounds can be obtained by considering the regularized functional \eqref{ELR}.   

The deep learning hypothesis space is too expressive for hypothesis space complexity bounds such as \eqref{gen_2} to be obtained.   However, that argument ignores the data augmentation, which is known to be a form of regularization.  We showed that many other modern data augmentation methods, including adversarial training, can also be interpreted as regularization.  The regularized loss \eqref{RT} may be more amenable to analysis than  \eqref{EL-A}, provided the regularizer is enough to make the problem mathematically well-posed.    Examples coming from data augmentation and adversarial training to show that \eqref{RT} can be represented as an explicit PDE regularized model.

The regularization interpretation makes a link between traditional machine learning methods and deep learning models, but requires nontraditional interpretations in both settings.  Proving generalization for nonparametric models with Fourier regularization \eqref{Reg_Fourier} could be  first step towards generalization bounds for regularized neural networks.   While the regularization interpretation \eqref{RT} is empirical, data augmentation rich enough (such as adding Gaussian noise)  to make the empirical measure supported on the full data distribution could lead to a well-posed global regularization.

\bibliographystyle{alpha}

\newcommand{\etalchar}[1]{$^{#1}$}

\end{document}